\declaretheoremstyle[
  spaceabove=6pt, spacebelow=6pt,
  headfont=\normalfont\bfseries,
  notefont=\mdseries, notebraces={[}{]},
  bodyfont=\normalfont,
  postheadspace=1em,
]{mystyle}
\declaretheoremstyle[
    spaceabove=-6pt, 
    spacebelow=6pt, 
    headfont=\normalfont\bfseries, 
    bodyfont = \normalfont,
    postheadspace=1em, 
    qed=$\square$, 
    headpunct={:}]{myProofstyle} 
\declaretheorem[thmbox=L]{boxtheorem L}
\declaretheorem[thmbox=M]{boxtheorem M}
\declaretheorem[thmbox=S]{boxtheorem S}
\declaretheorem{theorem}
\declaretheorem[thmbox=L]{definition}
\theoremstyle{remark}
\def\eqref#1{equation~\ref{#1}}
\def\1{\bm{1}}
\DeclareMathAlphabet{\mathsfit}{\encodingdefault}{\sfdefault}{m}{sl}
\SetMathAlphabet{\mathsfit}{bold}{\encodingdefault}{\sfdefault}{bx}{n}
\title{Building, Reusing, and Generalizing Abstract Representations from Concrete Sequences}
\author{Shuchen Wu \\
Helmholtz Munich \\
Max Planck Institute for Biological Cybernetics \\
\texttt{shuchen.wu@tuebingen.mpg.de} \\
\And
Mirko Thalmann \\
Institute for Human-Centered AI \\
Helmholtz Munich\\
\texttt{mirko.thalmann@helmholtz-munich.de} 
\And
Peter Dayan \\
Department of Computational Neuroscience \\
Max Planck Institute for Biological Cybernetics \\
\texttt{dayan@tuebingen.mpg.de} 
\And
Zeynep Akata \\
Helmholtz Munich \\
Technical University of Munich \\
\texttt{zeynep.akata@helmholtz-munich.de} 
\AND
Eric Schulz \\
Institute for Human-Centered AI\\
Helmholtz Munich \\
\texttt{eric.schulz@helmholtz-munich.de}
}
\begin{document}

\maketitle

\begin{abstract}
Humans excel at learning abstract patterns across different sequences, filtering out irrelevant details, and transferring these generalized concepts to new sequences.
In contrast, many sequence learning
models lack the ability to abstract, which leads to memory
inefficiency and poor transfer. We introduce a non-parametric hierarchical variable learning model (HVM) that learns chunks from sequences and abstracts contextually similar chunks as variables. HVM efficiently organizes memory while uncovering abstractions, leading to compact sequence representations.  When learning on language datasets such as babyLM, HVM learns a more efficient dictionary than standard compression algorithms such as Lempel-Ziv. In a sequence recall task requiring the acquisition and transfer of variables embedded in sequences, we demonstrate HVM's sequence likelihood correlates with human recall times. In contrast, large language models (LLMs) struggle to transfer abstract variables as effectively as humans. From HVM's adjustable layer of abstraction, we demonstrate that the model realizes a precise trade-off between compression and generalization. Our work offers a cognitive model that captures the learning and transfer of abstract representations in human cognition and differentiates itself from LLMs.
\end{abstract}

\section{Introduction}
Abstraction plays a key role in intelligence \citep{konidaris_necessity_2019}. Philosophers traditionally view abstract ideas as formed by identifying commonalities across experiences, distilled from concrete impressions grounded in perception \citep{Kant:1998, Fichte2005, Stern1977}. Psychologists suggest that abstraction arises from personal experiences, such as forming the concept of ``whiteness'' by observing various white objects \citep{yee_abstraction_2019, barsalou_1999}. Abstract concepts are thought to build on concrete concepts and on top of the previously learned abstractions, thereby varying in complexity \citep{cuccio_peircean_2018, van_oers_contextualisation_2001, CollinsQuillian1969, Piaget1964}. The ability to abstract, which is often seen as a human-specific trait, enables reasoning, generalization, and problem-solving in novel contexts \citep{ohlsson_abstraction_1997, DEHAENE2022751, duncker_problem-solving_1945}.

We hypothesize that the world contains patterns across scales of time and abstraction. Intelligent agents-facing sequences with nested hierarchical structures- need to model these structures to store, process, and interact in such environments. As a rational strategy, intelligent agents shall characterize the temporal structure via chunking and characterize the abstract structure via identifying  different items that play similar roles. To explore these operations, we design a generative model that produces sequences nested with hierarchical structure and propose an approximate recognition model that conjunctively learns to chunk and abstract.

We go beyond previous proposals that introduce chunking as a mechanism for learning to compose complex structures from elementary perceptual units \citep{Gobet2001, Miller1956TheInformation, wu_learning_2022} and propose a cognitive model that combines chunking and abstraction in one single system. The model uses abstraction in two key ways: first, by identifying shared features to facilitate efficient pattern retrieval
; and second, by categorizing different sequential items that appear in the same context, much like a variable in a computer program. 
These paired mechanisms enable the model to parse sequences into chunks and form abstract patterns based on both concrete and previously learned abstractions, allowing for compact representations while revealing recurring patterns at the level of abstract categories. Consequentially,  increasingly complex abstract patterns can be discovered layer by layer.

We first demonstrate the benefits of abstraction in memory efficiency and sequence parsing by comparing our algorithm with previous chunking models and other dictionary-based compression methods. Then, we show that the model exhibits human-like signatures of abstraction in a memory experiment requiring the transfer of abstract concepts. In the same experiment, we contrast the model's generalization behavior with large language models (LLMs). Additionally, we demonstrate the connection between abstraction level and abstract concept transfer by varying the level of abstraction as a parameter in the model. Our work offers a cognitive model that captures the learning and transfer of abstract representations in human cognition and differentiates itself from the behavior of artificial agents. 





\begin{figure}[ht!]
    \centering
    \includegraphics[height = 0.15\textheight, width = \textwidth]{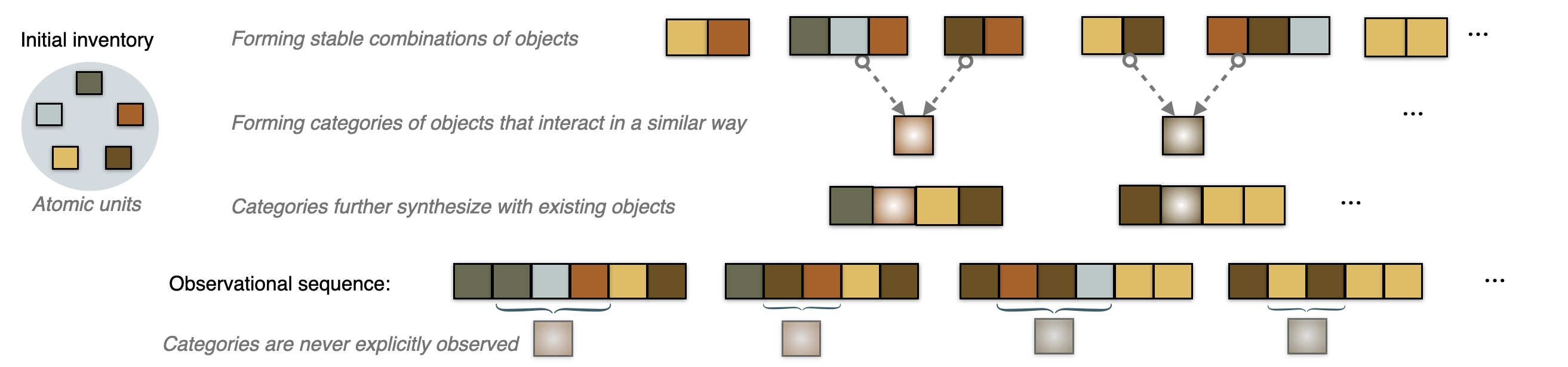}
    \caption{Generative Model. The inventory is initialized with a set of atomic units, which randomly combine into objects. Some objects are randomly selected to become categories. Categories of existing objects have similar interaction properties. The dashed arrow points from those objects to their corresponding categories. These categories further combine with existing objects. An observational sequence is composed of objects randomly sampled from the existing inventory. Categories are latent and never explicitly observed but are manifested in one of the denoting objects.}
    \label{fig:generativemodel}
\end{figure}

\section{Generating sequences with objects that contain hierarchical abstract structures}

\textcolor{black}{Hierarchical structures have been long presumed to characterize natural sequences — such as molecules composed of chemical elements or the hierarchical nature of language \citep{abler_particulate_1989, Sportiche2013, von1999humboldt}. To study models respect hierarchical structure, we take a standard approach starting from a conformative generative model \citep{teh_hierarchical_2006,milan_forget-me-not_nodate}.
We design a probabilistic generative model that generates sequences by sampling objects from an inventory which contains a hierarchical structure.} 

The generative model creates an inventory of recurring objects over $d$ iterations of expansion. As illustrated in Figure \ref{fig:generativemodel}, the inventory starts with a set of atomic units $\mathbb{A}$. 
On each iteration, a novel object or category is created equiprobably. A new category (graded node) is created by pointing to a random selection of objects from the inventory up to the moment (objects are treated disjunctively). A new object is created by concatenating a random selection of pre-existing objects or categories from the existing inventory. After the inventory has been expanded up to the $d$-th iteration, the independent occurrence probability in the sequence is sampled from a flat Dirichlet distribution $f(p(c_1),...,p(c_{|\mathbb{A}|});\alpha_1,...,\alpha_{|\mathbb{A}|}) = \frac{1}{\mathbf{B}(\mathbf{a})}\prod_{i = 1}^{|\mathbb{A}|} P(c_i)^{\alpha_i - 1}$, $\alpha_i = 1 \forall i $ and assigned to each object in the inventory. Similarly, a probability is sampled from a flat Dirichlet to assign the occurrence probability of the objects within each category. More details can be found in the Appendix \ref{code:HVMgenerativemodel}. 

To create an observational sequence, objects are randomly drawn from the inventory one after another with the assigned probability until they reach the desired sequence length. If the sampled object contains an embedded category, one of the objects corresponding to the category is sampled. This process is repeated recursively until all categories are replaced by specific objects, resulting in a sequence of discrete atomic units.




\section{Learning abstractions from sequences}


 
We ask what computational principles could help an agent discover objects and categories from such observational sequences without supervision. We propose that two mechanisms suggested by the cognitive literature are vital: chunk proposal and variable discovery. Chunking concatenates learned objects and forms new ones; variable discovery groups chunks with similar interaction properties into a category. We build on top of the hierarchical chunking model (HCM) \citep{wu_learning_2022}, which learns a belief set $\mathbb{B}$ of a chunk dictionary $\mathbb{C}$ from discrete sequences. We expand the model so that it can also learn variables while improving the memory efficiency of the model. 
By identifying stably recurring entities as chunks and grouping similar entities into categories as variables, the agent can learn a structured inventory of identifiable patterns and use these patterns as entities to parse the sequence, leading to a more compressed factorization of perceptual sequences. 

HVM learns a belief set $\mathbb{B}$ that contains both a dictionary of chunks $\mathbb{C}$ and variables $\mathbb{V}$. The variables are proposed as abstract entities based on the transition and marginal counts. As shown in Figure \ref{fig:hvm}a, each variable $v \in \mathbb{V}$ denotes a set of chunks $E(v) = \{c_j\}$, $c_j \in \mathbb{C}$. The model also learns the probability of each chunk that a variable denotes $\forall v \in \mathbb{V}$, $\sum_j P(v \rightarrow c_j) = 1$, $c_j \in \mathbb{C}$.




\begin{figure}[t]
    \centering
    \includegraphics[width = \textwidth]{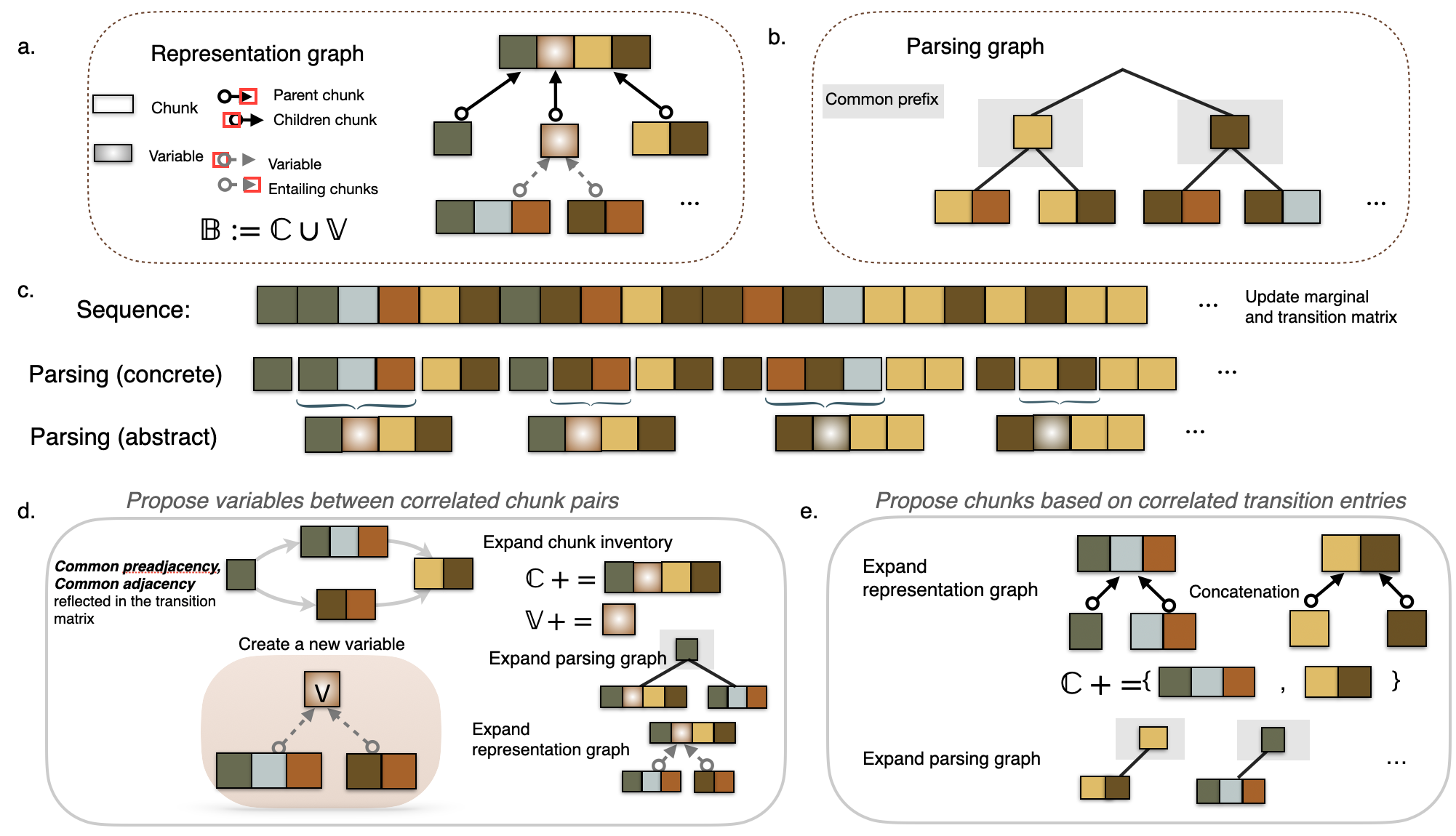}
    \caption{HVM builds up a representation graph and a parsing graph. a. A representation graph contains the learned chunks (nodes with solid colors) $\mathbb{C}$ and the contained variables (nodes with gradient colors) $\mathbb{V}$. Black arrows denote concatenation. Gray and dashed arrows point from chunks of the same category to the variable node that denotes this category. b. Abstraction organizes memory in a parsing graph. Nodes shaded by gray are abstract chunks being the common prefix intersection of its children. During parsing, to allocate to the matching chunk in $\mathbb{C}$, the model starts searching from the root of the parsing tree and traverses to the deepest node consistent with the upcoming sequence content. c. Upon completing each sequence parse, HVM updates counts on chunk frequencies and transitions and proposes inventory expansions, enabling a layer-wise discovery of recurring patterns per iteration, from specific to the more abstract. d. HVM proposes variables amongst correlated consecutive chunk pairs. e. Chunk proposal. }
    \label{fig:hvm}
\end{figure}

\paragraph{Parsing the sequence one chunk at a time}
A fundamental feature of the model is to parse sequences in chunks as the basic cognitive units \citep{Miller1956TheInformation}. 
Along with each parse $t$, the transition counts between the previous $c_L$ and next chunks $c_R$ are recursively updated: $T_{ij}(t+1) = T_{ij}(t) + [i = c_L][j = c_{R}]$
($[ \cdot ] = 1$ if the argument is true and 0 otherwise) and along with the identification frequency of each parsed chunk $M_{i}(t+1) = M_{i}(t) + [i = c]$. When modeling human behavior, each entry of $M$ and
$T$ multiplies with a memory decay parameter $\theta$ per parsing
step. The probability of observing a sequence of parsed chunks
$c_1,c_2,..., c_{N}$ then becomes $P(c_1,c_2,..., c_{N}) =
\prod_{c_i \in \mathbb{C}} P_{\mathbb{C}}(c_i)$. 


To parse a sequence, HCM iteratively chooses the biggest chunk amongst its learned dictionary $\mathbb{C}$ consistent with the upcoming sequence. The end of a previous parse initiates the next parse. As the dictionary size $|\mathbb{C}|$ increases, searching for the biggest consistent chunk becomes computationally expensive \citep{schreiber2023biologicallyplausible}. 
In HVM, we introduce one notion of abstraction as finding \textit{commonalities amongst memory items} to organize memory structure and speed up chunk retrieval during parsing. Memory items in the learned dictionary are organized into a hierarchical parsing graph that connects chunks with their common prefixes. Hence, all children chunks are different except for sharing a common prefix from the parent. The parsing graph arranges the chunks in $\mathbb{C}$ into a prefix Trie structure (Figure \ref{fig:hvm}b), reflecting the cue-based, content-addressable nature of memory retrieval \citep{Cunnings2018,dotlacil_parsing_2021, Anderson1974}. This design reduces search time to retrieve a chunk as arranged in the parsing graph, commonly used in predictive text or auto-complete dictionaries to speed up search steps \citep{Fredkin60}.
At every parsing step, HVM identifies the deepest chunk in the parsing graph that is consistent with the upcoming sequence. The end of the previous parse initiates the next parse. The search process would take the time complexity of $O(D)$, scaling with the depth $D$ of the tree compared to HCM time complexity of $O(|\mathbb{C}|)$ scaling with the size of $\mathbb{C}$. The appendix shows a guarantee to reduce the number of parsing steps for chunk retrieval \ref{SI:PSS}. 

\paragraph{Learning chunks} 
From parsed sequences, the model estimates the occurrence probability of chunk via the entries of $M$ as $P_{\mathbb{C}}(c_i) = \frac{M_{i}}{\sum_{c_j \in \mathbb{C}}M_{j}}$.
The model starts with the null hypothesis $\mathcal{H}_0$ that all consecutively parsed chunks $c_L$ and $c_R$ are statistically independent $P(c_L,c_R) = P(c_L)P(c_R)$.  If a significant correlation is found between consecutively parsed chunk pairs (with $p = 0.05$), the null hypothesis is rejected, and the pair is concatenated into a new chunk, $c_L \oplus c_R$, which is then added to the dictionary $\mathbb{C}$ as a new entry. Upon creating each new chunk, an edge from the left parent chunk (sharing the same prefix) connects to the newly created chunk in the parsing graph. When no parent chunk can be found, an ancestor node is created in the parsing graph, usually the sequence's atomic units. We prove in the SI section \ref{SI:recognition} that under restricted conditions in which the sub-objects constituting the object in the ground truth are parsed faithfully by the model as sub-chunks, then these sub-chunks will be eventually proposed to concatenate into one chunk. 



\paragraph{Learning variables}
A variable denotes distinct observations appearing in the same context (here defined as distinct chunks sharing preceding and succeeding chunks). Given consistent parsing of subchunks as object combination components in the generative model, the true chunks that belong to the same category will necessarily appear in the preadjacency and postadjacency entries (more explanation in SI \ref{SI:recognition}). HVM proposes variables amongst the significantly correlated chunks ($p \leq 0.05$) in the transition matrix consistent with this structure. For example, consider the case that the sequence includes A-ABC, A-DC, and  ABC-ED, DC-ED. This suggests that ABC and DC play similar roles relative to A (coming after) and ED (before). Therefore, ABC and ED can be proposed as a new variable V to represent the abstract property that captures the distinct entities. V denotes ABC and DC, and the model identifies V if either ABC or DC is identified during parsing, as illustrated in Figure \ref{fig:hvm}d and Figure \ref{fig:ccce} in SI. 

Among the correlated consecutive chunk pairs, the model identifies all eligible transitions by intersecting the post-adjacency columns with the pre-adjacency rows, proposing a set of chunks $E(v) = {c_1, c_2, ..., c_j}$ represented by a new variable $v$. This variable proposal is accepted if the set of chunks that satisfy the condition $T_{min} \leq |E(v)| \leq T_{max}$ and $\sum_{c_i} M(c_i) \geq freq_T$.
A variable denotes a set of chunks $E(v)$ and is identified if any of the chunks that it denotes is parsed (Figure \ref{fig:hvm}c). Together, the common preceding chunk, in conjunction with the newly proposed variable, followed by the common succeeding chunk, is concatenated and proposed as a novel chunk $c_L \oplus v \oplus c_R$ to add to the inventory $\mathbb{C}$. An edge that connects $c_L$ to the new chunk $c_L \oplus v \oplus c_R$ to be added to the parsing graph, as illustrated in Figure \ref{fig:hvm}d. 
Two variables are merged into one if they share the same preceding and succeeding chunks. During parsing, a chunk is consistent with the sequence if any of its included variables contain a denoting chunk that is consistent with the sequence. Upon identifying a chunk during parsing, the marginal and transition count for both the chunk and the immediate variables that the chunk belongs to is incremented.

\paragraph{Representation cleaning} Upon completion of each sequence parsing, the model uses correlations to propose new variables and chunks. It then uses the expanded inventory for the next parse. Unused variables and chunks are deleted upon the completion of each parsing iteration. 
\section{Results}
\begin{figure}[ht!]
    \centering
    \includegraphics[width = 1.\textwidth]{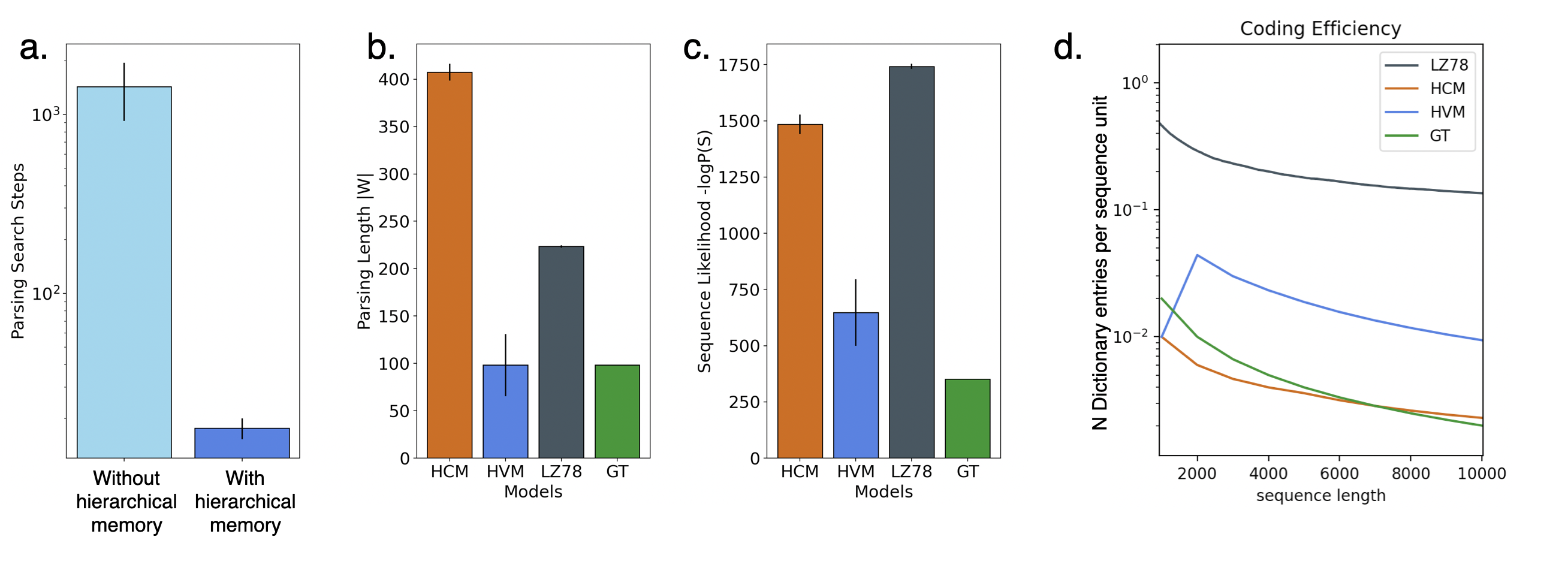}
    \caption{a. The effect of hierarchical memory structure on parsing search steps. b. Comparison between models in terms of sequence length after parsing. GT denotes the ground truth sequence length by the generative model. c. Model comparison based on sequence likelihood. d. Model comparison based on coding efficiency. Example model comparison with sequence length $|S| = 1000$, nested hierarchy depth $d = 30$, atomic set of size $|A| = 10$.}
    \label{fig:modelcomparison}
\end{figure}

\subsection{Model evaluation}
HVM is an approximate inverse of the generative model, as
in practice, learning the ground truth hierarchical patterns that generate observed data is a nonidentifiable problem \citep{Post1946, Greibach1968}. Therefore, we use a set of measures for model evaluation, focusing on parsing search steps, sequence length, sequence negative log-likelihood, and encoding efficiency.
We trained models on sequences generated by the hierarchical generative model until convergence. For each iteration, the models parse the entire sequence using the dictionary updated from the previous iteration and propose novel chunks/abstractions upon completing the parse. While HVM learns both chunks and variables per iteration, HCM learns only chunks and no variables. Previous research has related human memory in specific tasks to lossy compression \citep{Nassar2018}, with abstraction playing a crucial role in successful learning and transfer in sequence memory recall tasks. For reference, we also compared the models with LZ78, an off-the-shelf compression algorithm underlying compression schemes such as GIF and PNG, which also parses sequences into chunks and builds a dictionary to compress sequence data \citep{lempelziv78}. Comparisons are shown in Figure \ref{fig:hvm}. 

\textbf{Organizing chunks in a parsing graph based on common prefix reduces parsing search steps} We compared HVM with HCM, which does not organize memory into a hierarchical parsing graph. Figure \ref{fig:modelcomparison} a. shows that organizing chunks in a parsing graph dramatically reduces the average number of parsing search steps, i.e., the number of search steps needed for the model to locate the biggest chunk per parse. This search step now scales with the depth of the prefix tree compared to the size of the inventory in HCM. More discussions can be found in \ref{SI:PSS}. 

\textbf{Discovering bigger recurring patterns}
We compared HVM with its counterpart, HCM, which does not learn variables at the end of the learning iteration. Figure \ref{fig:modelcomparison} b compares sequence length $|W|$ after parsing. HVM transforms the sequence $|S|$ into a code with a much smaller size $|W|$ compared to HCM and LZ78 and is on par with the ground truth (GT). Learning variables that denote distinct chunks occurring in similar contexts helps HVM to learn bigger patterns underlying the sequence on the description level of these variables. 

\textbf{Parsing sequences with higher likelihood}  Figure \ref{fig:modelcomparison} c compares the negative log likelihood ($-\log P(S) = \prod_{c_i \in \mathbb{C}} P(c_i)$) of the parsed sequence upon convergence. HVM parses sequences more efficiently, as variables recur more frequently in sequences than their representing chunks, and therefore, the model that learns variables parses the sequence with higher likelihood.  

\textbf{Encoding more efficiently}
In many applications, the size of the dictionary affects compression efficiency \citep{Navarro2007,Ferragina2005}.  We compare the encoding efficiency, i.e., the ratio between the dictionary size (number of entries) and the original sequence length $S$. Figure \ref{fig:modelcomparison}d shows the relation between the ratio and the sequence length. To encode a sequence of the same length, LZ78 creates a bigger dictionary than HVM and HCM. Models that harness the embedded hierarchical structure in sequences encode sequences more efficiently and learn a smaller dictionary to encode the same sequence length. 
\subsection{Learning from real-world sequential data}
\begin{table}[ht!]
    \centering
    \resizebox{\textwidth}{!}{
    \begin{tabular}{|l|l|ccc|}
        \toprule
        \textbf{Data Domain} & \textbf{Model} & \textbf{Compression Ratio} & \textbf{NLL} & \textbf{Coding Efficiency} \\
        \midrule
        \multirow{3}{*}{CHILDES \citep{MacWhinney2000}} 
        & LZ78        & 0.38          & 2837.50           & 0.34                     \\
        & HCM         & 0.51          & 2783.71           & 0.05                     \\
        & HVM         & 0.36          & 1953.01           & 0.06                     \\
        \midrule
        \multirow{3}{*}{BNC \citep{BNC2007}} 
        & LZ78        & 0.39          & 3136.67           & 0.37                     \\
        & HCM         & 0.65          & 3591.60           & 0.06                    \\
        & HVM         & 0.50          & 3108.33          & 0.08                     \\
        \midrule
        \multirow{3}{*}{Gutenberg \citep{Gerlach2020}} 
        & LZ78        & 0.39         & 3156.61           & 0.37                     \\
        & HCM         & 0.69          & 3770.36           & 0.06                     \\
        & HVM         & 0.54          & 3252.84           & 0.12                     \\
        \midrule
        \multirow{3}{*}{Open Subtitles \citep{Lison2016}} 
        & LZ78        & 0.41         & 3395.09           & 0.39                     \\
        & HCM         & 0.74          & 4151.89           & 0.07                     \\
        & HVM         & 0.63          & 3764.48           & 0.07                     \\
        \bottomrule
    \end{tabular}}
    \caption{Model comparison on alternative sequences. NLL stands for negative log-likelihood.}
    \label{tab:extended_model_comparison}
\end{table}
Going beyond artificially generated sequences, we evaluate HVM, HCM, and LZ78 in text domains from the BabyLM language dataset, which contain text snippets from a collection of data domains such as conversational child-directed speech, narrative stories, and other linguistically rich environments \cite{warstadt2023papersbabylmchallenge}. For each data domain, we took random snippets of 1000 characters and calculated evaluation metrics, including compression ratio (the number of tokens before compression divided by the number of tokens after compression), sequence complexity (negative log-likelihood), and compression efficiency (the length of the compressed sequence divided by the number of dictionary entries). 

As shown in table \ref{tab:extended_model_comparison}, LZ78 performs well in terms of compression efficiency, which is expected given its design purpose. However, the HVM model exhibits notable advantages when considering alternative evaluation metrics. Specifically, for text across the four data domains analyzed, cognitive models that incorporate the hierarchical structure of data outperform traditional compression methods in terms of encoding efficiency. This is because LZ78, which does not make strong assumptions about sequence structure, tends to create redundant entries in its dictionary. While this redundancy reduces the sequence length after compression, it introduces many infrequently used entries. In some data domains, HVM outperforms LZ78 in compression ratio and negative log-likelihood as well. Amongst all domains, HVM compresses the sequence further and parses the sequence with lower complexity than HCM.


\subsection{HVM resembles human sequence memorization and transfer}
\begin{figure}[ht!]
    \centering
    \includegraphics[width=1\linewidth]{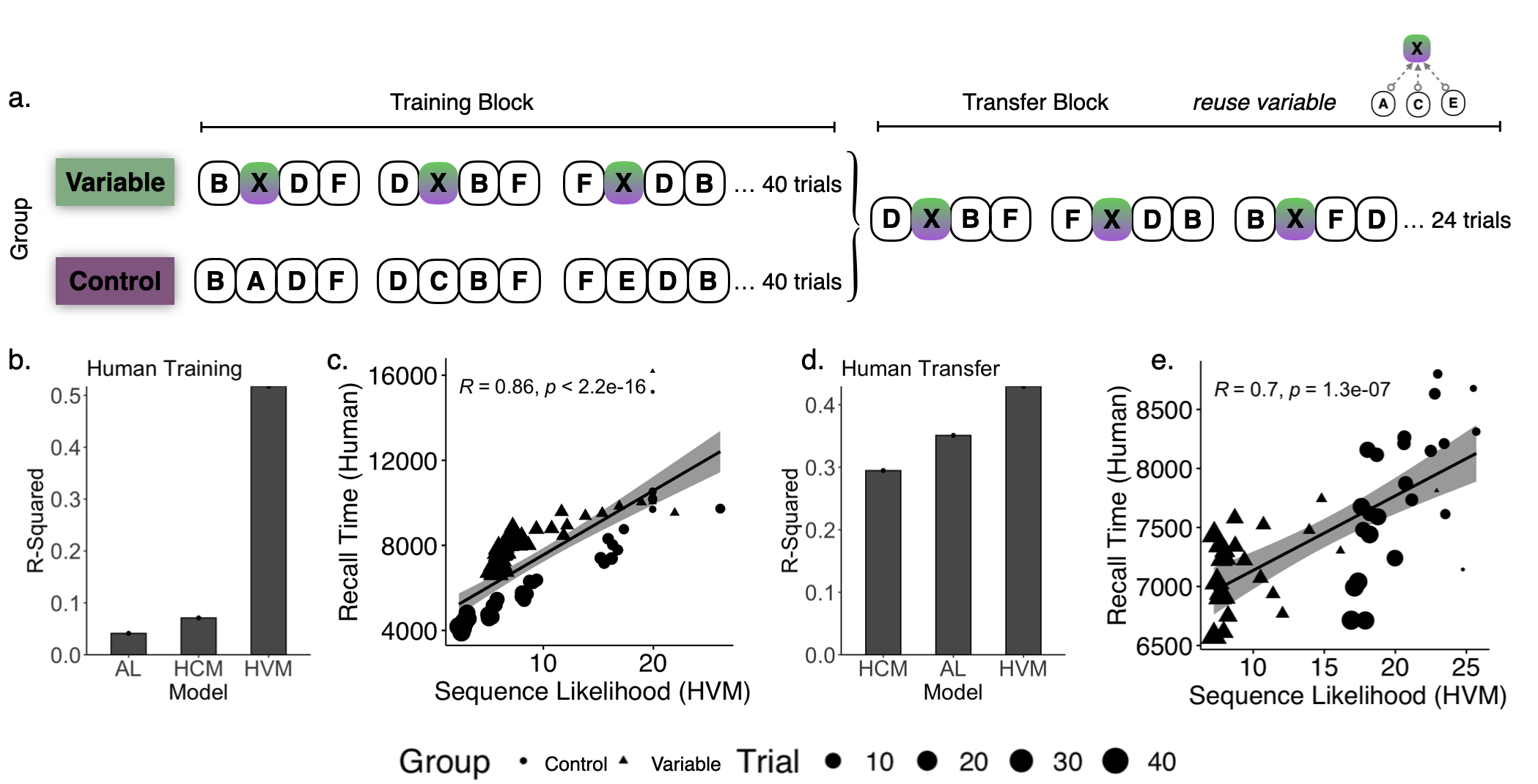}
    \caption{a. A sequence memory task that demands transferring a variable x from the training to the test block. b., c., d., e.,  Complexity (negative log-likelihood) progression of HVM correlates with human memory recall time in the training (b., c.) and the transfer block (d., e.). Memory decay parameter $\theta = 0.996$. Marker size increases with trial number.}
    \label{fig:humanexperiment}
\end{figure}
We take the relation between transfer to learning abstractions to the setting of a memory experiment that demands learning and transferring variables \citep{wu_thalmann_schulz_2023}. 
We ask whether HVM resembles aspects of human learning while enjoying the advantage of learning an interpretable dictionary. 

In the experiment, 112 participants were instructed to recall a sequence of presented colors from memory (Figure \ref{fig:humanexperiment} a). Participants were randomly assigned to a variable group and a control group. During the training block (40 trials), the control group was instructed to remember a fixed sequence BADF DCBF FEDB for each trial (each letter denotes a distinct color). In contrast, the variable group was instructed to remember sequences overlapping in 9 of the 12 colors: BXDF DXBF FXDB. A variable X, however, appeared at serial positions 2, 6, and 10, which could be occupied with the letters A, C, or E with equal probability for each trial. After the training block, both groups were then tested on the transfer block. The transfer sequence overlapped with the training sequence only at the variable positions: DXBF FXDB BXFD. The time participants took to recall the entire sequence for each trial was recorded. Previous studies suggest human recall time relates logarithmically to the perceptual predictability of the sequence \citep{Carpenter1995,Anderson89,SMITH2013302,ELMAN1990179}. We compare human recall time to the model's negative log-probability (likelihood) evaluated on the instruction sequence given.

To simulate the sequence likelihood for trial $n$, we used the instruction sequences up to trial $n-1$. This allowed the HVM to learn in an online manner, i.e. abstractions and chunks were proposed upon each parsing completion. For comparison purposes, we also simulated HCM with the same parameter setting, and an associative learning model (AL) that updates first order transitions between atomic sequential units. 
The sequence likelihood was evaluated as $-\log P(S) = -\log P(c_1) \prod_{i = 2, ..., n} P(c_i|c_{i-1})$. $c_1, c_2, ..., c_n$ are the units used to parse the sequence (chunks for HCM and HVM and atomic units for the associative learning (AL) model).  

Figure \ref{fig:humanexperiment} shows the relation between model sequence likelihood and human recall time during the training (b, c) and the transfer block (d, e). The size of the dot represents the trial number, and the shape of the dot represents the group (the triangle being the variable group, the circle is the control group). Figure \ref{fig:humanexperiment}b and d show the R-squared goodness-of-fit regressing the various models' sequence likelihoods onto human subjects' sequence recall times. 
The sequence negative log-likelihood from the HVM correlates with human recall time during the training ($R = 0.86, p \leq 0.001$) and transfer blocks ($R = 0.7, p \leq 0.001$). The latter suggests that HVM can generate behavior that resembles that of human knowledge transfer in the memorization of novel sequences with embedded variables.

\subsection{Comparing abstraction learning and transfer in LLMs}
\begin{figure}[ht!]
    \centering
    \includegraphics[width=1\linewidth]{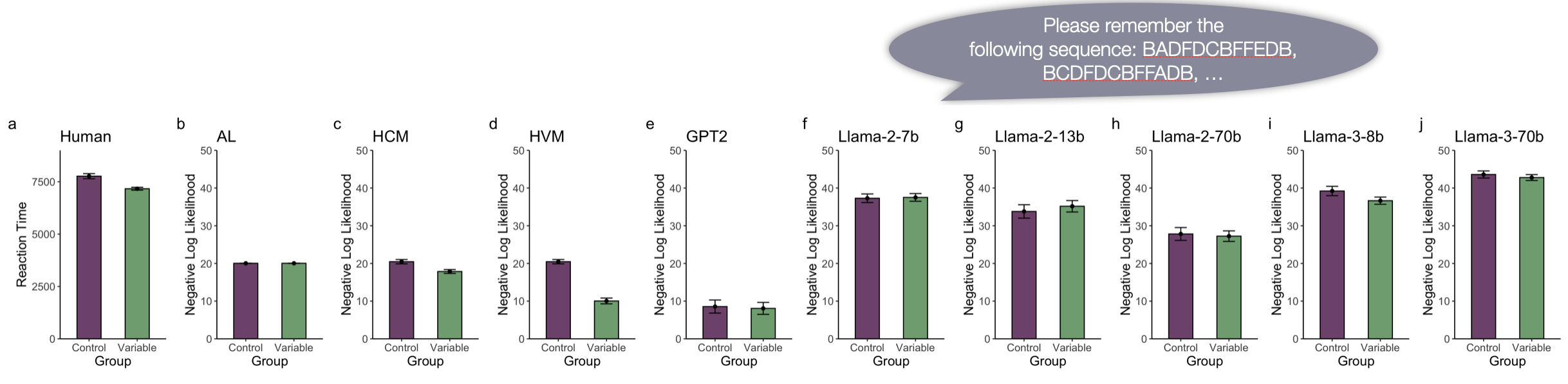}
    \caption{Comparison between human, variations of cognitive models, and AI models on the transfer block of the memory recall experiment. Bar plot shows the average human sequence recall time, and the sequence negative log-likelihood evaluated by the various cognitive and large language models. }
    \label{fig:llm}
\end{figure}
\textcolor{black}{In recent years, large language models (LLMs) have demonstrated their emergent abilities \citep{wei2022emergent}, raising the concern that their intelligence will soon exceed that of humans, while at the same time, LLMs exert severe limitations in a set of tasks that demands abstract thought \citep{Fleuret2011,OdouardMitchell2022}. Given the critical role that abstraction plays in reasoning and generalization, it becomes increasingly important to delineate the aspects of LLM that most significantly deviate from human cognition, and find out what features inside a cognitive model would provide a missing behavioral aspect inside LLMs. The missing feature will inform the limitations of LLM’s computational component that is predictive of its ability to generalize and transfer. Specifically, we ask whether LLMs abstract similarly to humans and how their behavior compares to cognitive models incorporating abstraction. To explore this question, we conduct the same sequence recall experiment on LLMs.}

We adapt the instructions from the human experiment into prompts for in-context learning in LLMs. The LLMs are tasked with predicting the next subsequent tokens based on the context of previously seen instruction sequences. The specific prompt is taken from the instruction given to the human participants: ``Please remember the following sequence: BADFDCBFFEDB, ..., BCDFDCBFFADB.'' We then calculate the conditional probability of the next token, $z_i$, given the instruction sequence history up until the current token $P(z_i|prompt)$. After obtaining this probability, the subsequent token in the instruction is added to the prompt, and the LLM's conditional probability for the following token is updated accordingly.
Using this prompt-chaining method, we can calculate LLM's negative log-likelihood for the next tokenized instruction sequence: $-\log P(S) = -\log P(z_1|prompt)\prod_{i = 2, ..., n} P(z_i|prompt_{i-1})$, analogous to the cognitive models. We apply this approach to evaluate the token prediction likelihood in four large language models: GPT-2 \citep{radford_language_2019} and three variations of the Llama 2 model \citep{touvron2023llama} and two variations of the Llama 3 model.


Shown in Figure \ref{fig:llm} are the transfer behavior across humans, LLMs, and various cognitive models for comparison. Supplementary section \ref{sup:llm} also shows the R-square value regressing model negative log-likelihoods on human recall time. For humans, the group trained on sequences with embedded variables recalls transfer sequences faster than the group trained on control sequences. Relating sequence recall time with sequence negative log-likelihood, the cognitive models HCM and HVM, on average, exhibit lower sequence negative log-likelihood after learning from a training block that shares variables with the transfer block. In contrast, the associative learning model and all variants of large language models do not differentiate between training sequences that share variables with the transfer block and those from a training block without transferrable variables. 
\subsection{Abstraction, distortion, and generalization}
\begin{figure}[ht!]
\centering
\includegraphics[width=1\textwidth]{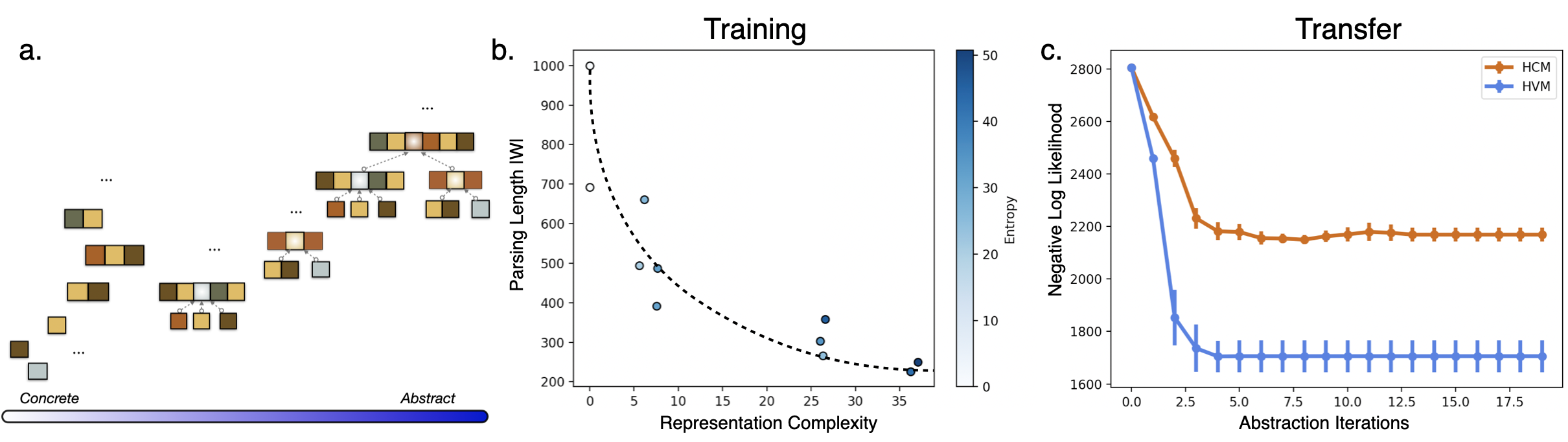}
    \caption{a. New chunks or variables are created from previously learned ones, adding increasingly nested structures to the representation graph. b. As the allowable distortion increases (moving right on the x-axis), the required rate to encode the data in terms of chunks decreases (moving down on the y-axis). Introducing variables enables a more compressed sequence, which trades off with higher representational complexity introduced by the variables embedded inside the chunks. c. Abstraction and transfer: The higher the abstraction layer, the higher the likelihood (lower negative log-likelihood) that HVM parses novel sequences.}
\label{fig:RD}
\end{figure}
With a model that can learn interpretable abstraction, we can delve into the relationship between learning layers of abstraction, compression, and uncertainty.
HVM updates its dictionary each iteration by proposing chunks and variables, which are used to parse the sequence for the next iteration. New chunks or variables are created from previously learned ones, adding increasingly nested structures to the representation graph.
This feature affords a controlled assessment of the relation between the level of abstraction, the amount of distortion that abstract representation introduces, and its relation to transfer and generalization to novel sequences.
We measure nestedness using the representation complexity
$RC(\mathbb{V}) = \sum_{v \in \mathbb{V}} \sum_{ u\in \mathbb{E}(v)} -\log P(u|v)$, which is closely related to the encoding cost of the representation graph \ref{sup:encoding_cost}. We also measured the uncertainty carried by the embedded variables in the chunks via entropy formulated as $\sum_{c \in \mathbb{C}} P(c) \sum_{v \in V(c)} \sum_{u\in E(v)}-P(u|v)\log P(u|v)$.
Figure \ref{fig:RD}a shows the relation between the three factors as a consequence of increasing abstraction learning iterations. As the representation graph becomes increasingly nested, the newly learned chunks explain a longer part of the sequence, and these chunks embed more uncertain variables. This trade-off between compression and uncertainty reflects rate-distortion theory specifying that the best possible compression of a signal $X$ contains a lower bound on quality loss specified by the Rate-Distortion Function (R(D)): $R(D) = \inf_Q R_Q \text{s.t.} \mathbb{E}[d(X, \hat{X})]\leq D$ \citep{Shannon1959, Cover2012}. Taking representation complexity as a distortion function, as the level of abstraction increases, HVM learns patterns that span and predict longer parts of the sequence while more distortion and uncertainty are introduced. This observation on distortion with the level of abstraction resonates with previous findings relating mental errors with learning more abstract representations in humans \citep{lynn_abstract_2020}.

\textbf{Higher levels of abstraction implies more flexible transfer}
If learning higher abstraction layers implies more distortion, what would be the benefit apart from compression? We suggest that the other side of the coin lies in generalization and demonstrates the effect of abstraction on model transfer. We trained HVMs with increasing abstraction layers and evaluated the models' transfer likelihood upon parsing a novel sequence. As shown in Figure \ref{fig:RD}b, the higher the abstraction layer, the higher the likelihood (lower negative log-likelihood) that HVM parses the transfer sequence.
To compare, we also evaluated the transfer performance of the chunking model (HCM) with an increasing level of abstraction iteration. While the transfer performance for HCM improves with more layers of chunk learning, it converges at a higher negative log-likelihood. Learning an increasing abstract representation by the HVM enables it to compress novel transfer sequences in a more succinct way. Furthermore, the more abstraction layers are acquired, the higher the relative advantage for HVM that learns abstraction holds over the HCM that does not learn variables.
\section{Related Work}
Previous modeling work on abstraction can be divided into two categories. The first is cognitive models, which characterize abstraction as searching for commonalities in explicitly symbolic systems \citep{lu_probabilistic_2021}; these models apply to explain human problem solving, finding problem solutions based on its conceptual analogies with the previously-solved problems \citep{HofstadterMitchell1994}. These models describe knowledge using symbolic, explicit graphs but do not address how the representation arises from perceptual data. Other models similar in taste include the recent wake-sleep Bayesian models such as DreamCoder \cite{ellis2021dreamcoder}. Our work proposes a mechanism for learning explicit abstraction from sequences. HVM starts learning with an empty dictionary and does not assume program primitives. Additionally, the inference/recognition in the HVM is sufficiently simple and can be simple without extensive machinery.

The second model category implements abstraction in implicit connectionist systems. Works on meta-learning models and LLMs suggest their ability to generalize across contexts and solve problems in a way similar to humans in some tasks \citep{wei2022emergent, Binz2023} while failing short on other abstract reasoning tasks \citep{Fleuret2011}. Meanwhile, abstraction has been argued to be implicitly present in artificial neural networks \citep{yee_abstraction_2019,Kozma18, JohnstonFusi2023,Ito2022} and biological neural activities  \citep{bernardi2020geometry,goudar2023schema}, albeit challenging to interpret \citep{fan2021}. Relative to this approach, our work provides a reductionist cognitive model that explicitly specifies the minimal components needed to learn interpretable abstract structures. 

\textcolor{black}{Besides cognitive models, our work relates to and differentiates from other sequence learning approaches. Our generative mechanism relies on chunk-based recursive generation and inventory growth rather than formal grammar rules \citep{Jelinek1992,Johnson2006}, Zipfian distributions \citep{teh_hierarchical_2006}, abruptly changing statistics \citep{milan_forget-me-not_nodate}, or sequences from a stochastic chain \cite{willems_context-tree_1995}. We suggest that the perceptual sequence contains nested hierarchical structures, with chunks as a central generative force. 
}

\textcolor{black}{
The hierarchical nature of HVM shares with models including topics models and Helmholtz Machine; however, it differs from topics models \cite{blei2003latent, teh_hdp_2006} aiming at sequences rather than ‘bags of words’. The Helmholtz Machine also contains hierarchical generative and recognition components, but its focus is similarly not on sequences, and its sequential extension \cite{hinton1995helmholtz} did not work well partly because a lack of iterative processing that HVM employs. 
Finally, the goal of HVM as a computational cognitive model differs from neural models such as CSCG \citep{george_clone-structured_2021} which has an implementation level focus \cite{marr1982vision}. However, we encourage future work to investigate the neural underpinnings of abstraction.}

\section{Discussion}
Our work has limitations. One is that variables are only proposed to be embedded between chunks and cannot come at the beginning or end of sequences, restricting the location of discoverable variables. 
\textcolor{black}{Secondly, representation learned later in iteration depends on those earlier acquired. However, this aspect aligns with the order and curriculum dependency of human learning. Children demand a simple-to-complex curriculum and utter more complex compounds later in their development \citep{Friedmann2021,Snow1972,Fernald1989}. There is a limit to HVM's expressivity: it won't be able to capture a pattern of two chunks being a variable number of chunks apart.} 
Sometimes, the variable structure can be arbitrarily nested depending on the representations learned up until that moment. Future work may look at optimizing the structure during learning. Finally, our work focused on the cognitive resemblance of HVM and has yet to explicitly optimize the model for computational efficiency, leaving space for future work. 

Our work opens up future directions both in cognitive science and machine learning. Previously, grammar learning \citep{CHOMSKY201333}, chunk learning \citep{Gobet2001}, and statistical/associative learning \citep{SAFFRAN199927} were studied as distinct characteristics of human sequence learning. We propose via HVM that the three components can be treated as finding invariant recurring chunks from sequences via statistical/associative learning, resulting in learning grammar-like structures. Our work suggests a normative origin of concrete and abstract chunk learning as a learning agent uncovers the underlying entities that constitute perceptual sequences. \textcolor{black}{Future work can relate this model to developing concepts or compare the model with human behavior and LLMs on more nested datasets.} 
\textcolor{black}{Meanwhile, the close tie between abstraction and generalization also urges future hypothesis-driven research regarding preconditions for learning abstract concepts to acquire more complex representations. This can help to illuminate the emergence of abstract representation through excessive training and its relation to the generalizability of connectionist models \citep{power2022grokking, miller2024grokking}. Finally, our work can act in the tokenization role to fuse with transformers or SSM to extract interpretable representations from complex sequence data while enjoying the power of history-dependent sequence prediction.}



\section{Reproducibility Statement}
Detailed information about the HVM algorithm, proof, generative model, test and experimental details and results can be found in the supplementary information section. The code used for the algorithm and experiments is available under this \href{https://github.com/swu32/HVM}{link}.



\bibliography{iclr2025_conference}
\bibliographystyle{iclr2025_conference}

\appendix
\section{Appendix / supplemental material}
\subsection{Set up}
An observational sequence $S$ is made up of discrete, integer valued, size-one elementary observational unit coming from an atomic alphabet set $\mathbb{A}$.  One example of such an observational sequence $S$ is: $$ 010021002112000... $$

An example belief set that contains only concrete chunks can be $\mathbb{B} = \{0,1,21,211,12,2112\}$. 

Using the belief set to parse the sequence $S$ results in the following partition. \underline{0} \underline{1} \underline{0} \underline{0} \underline{21} \underline{0} \underline{0} \underline{2112} \underline{0} \underline{0} \underline{0}.

Another example belief set $\mathbb{B}$ that contains chunks $\mathbb{C} = \{21v,0100,000\}$ with embedded variables $\mathbb{V} = \{v\}$. The denoting set of $E(v) = \{00, 12\}$. Then the sequence $S$ is parsed as  \underline{0100} \underline{$21v$} \underline{$21v$} \underline{000}.

\begin{definition}[Completeness]
We say that a belief set is \emph{complete} if at any point during the sequence parsing process, the upcoming observations can be explained by at least one chunk in the belief set. 
\end{definition}
In this work, the learning mechanism guarantees that the belief set is complete. 

\begin{definition}[Parsing Length $|W|$]
A parsing length $|W|$ of a sequence is the length of the sequence measured in chunks. 
\end{definition}

\subsection{Generative Model}
\begin{algorithm}[H]
\SetAlgoLined
\KwIn{A set of atomic elements $\mathbb{A}$; the number of combinations $d$; sequence length $l$}
\KwOut{$seq$, a sequence made of concrete observational units}
$cg \leftarrow$ initialize representation graph\;

\For{$i \leftarrow 1$ \KwTo $d$}{
$RAND \leftarrow$ random number between $0$ and $1$\;

\If{$RAND > 0.5$}{ \tcp{Object Creation}
$B \leftarrow$ $cg$.objectsAndCategories\;

$n_{combo} \leftarrow$ random.choice$([2,3,4,5])$\;

$samples \leftarrow$ random.sample($B$, k=$n_{combo}$)\;

\While{any of the first and last element in $samples$ are categories}{

$samples \leftarrow$ random.sample($B$, k=$n_{combo}$)\;
}
$newobject \leftarrow$ concatenate($samples$)\;

$cg$.addChunk($newobject$)\;

}\Else{ \tcp{Category Creation}
$B \leftarrow$ $cg$.objects\;

$n_{combo} \leftarrow$ random.choice$([2,3,4,5])$\;

$samples \leftarrow$ $n_{combo}$ random.sample($B$, k=$n_{combo}$)\;

$newcategory \leftarrow$ create a new category denoting $samples$\;

$cg$.addVariable($newcategory$)\;}}
$cg$.assignProbabilitiesToObjects()\;

$sampled seq \leftarrow cg$.sampleObjectAndSpecifyVariables($l$)\;

$seq \leftarrow$ convert $sampledseq$ to sequence\;

\caption{Pseudocode to generate sequences with nested abstract hierarchies.} 
\end{algorithm}\label{code:HVMgenerativemodel}


A new object is created by concatenating a random selection of pre-existing objects or categories from the existing inventory. After the inventory has been expanded up to the $d$-th iteration, objects are assigned with an independent occurrence probability sampled from a flat Dirichlet distribution $f(p(c_1),...,p(c_{|\mathbb{A}|});\alpha_1,...,\alpha_{|\mathbb{A}|}) = \frac{1}{\mathbf{B}(\mathbf{a})}\prod_{i = 1}^{|\mathbb{A}|} P(c_i)^{\alpha_i - 1}$, $\alpha_i = 1 \forall i $. Where the beta function when expressed using gamma function is: $\mathbf{B}(\mathbf{a}) = \frac{\prod_{i = 1}^{|\mathbb{A}|} \Gamma(\alpha_i)}{\Gamma(\sum_i^{|\mathbb{A}|}\alpha_i)}$, and $\mathbf{a} = (\alpha_1, .., \alpha_{|\mathbb{A}|})$. The parameters $(\alpha_1, .., \alpha_{|\mathbb{A}|})$ are identically set to one. 

Similarly, a probability is sampled from a flat Dirichlet to assign the independent occurrence probability of the set of object $E(v)$ that each category $v$ denotes. $\forall v \in \mathbb{V}$, $f(p(c_1),...,p(c_{|E(v)|});\alpha_1,...,\alpha_{|E(v)|}) = \frac{1}{\mathbf{B}(\mathbf{a})}\prod_{i = 1}^{|E(v)|} P(c_i)^{\alpha_i - 1}$, $\alpha_i = 1 \forall i $. This procedure is done for each created category. 

Intuitively, objects represent recurring observations of specific entities, such as molecules composed of chemical elements. Categories, on the other hand, represent broader categories of entities that share common properties, such as chemical elements belonging to the same class (e.g., noble gases or alkali metals). The observation sequence forms a nested hierarchy, where entities and their categories frequently interact and combine with others.

\subsection{Approximate Recognition Inverse}
\label{SI:recognition}
\begin{theorem}
If the left entity $c_L$ and the right $c_R$ (which can be either a chunk or a variable) of a ground truth variable $v$ in addition to its denoting chunks $c_1, c_2, ..., c_m$ has been learned, and every parsing of $c_L$, $c_R$, and $c_1, c_2, ..., c_m$ is consistent with the constitution of the sequence by the its way of generation, then $c_1, c_2, ..., c_m$ will be necessarily proposed by \textit{the common adjacency and common preadjacency} criterion into a novel variable $v'$ and the true denoting entities will be a subset of the denoting set $E(v')$, $\{c_1, c_2, ..., c_m\} \in E(v')$. 
\end{theorem}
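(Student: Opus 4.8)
The plan is to trace the true denoting chunks through the generative process into the model's transition statistics, and then check that the variable-proposal criterion cannot fail to collect them. First I would fix the generative origin of the context. Since $v$ is a ground-truth variable with left entity $c_L$ and right entity $c_R$, the generative model must have formed an object $O = c_L \oplus v \oplus c_R$ in which $v$ is latent; each emission of $O$ instantiates the category $v$ as one of its denoting chunks $c_i$ (each carrying positive within-category probability under the flat Dirichlet). By the hypothesis that $c_L$, $c_R$, and all of $c_1,\dots,c_m$ have been learned and that every parse is consistent with the way the sequence was generated, each emission of $O$ that realized $v$ as $c_i$ is segmented by the model into exactly the three consecutive parsed entities $c_L, c_i, c_R$. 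Consequently the transition counts accumulate $T_{c_L, c_i} \ge 1$ and $T_{c_i, c_R} \ge 1$ for every $i$ whose category probability was realized.

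Next I would establish statistical significance of these transitions. Because $O$ recurs (it is a learned object), each $c_i$ follows $c_L$ and precedes $c_R$ far more often than the independence null $P(c_L,c_i)=P(c_L)P(c_i)$ predicts: within occurrences of $O$ the adjacency is forced by construction, so the empirical conditional frequency is separated from the marginal by a gap fixed by the frequency of $O$. Spending the ``has been learned'' hypothesis to supply enough realized occurrences, a concentration argument then shows the correlation test rejects $\mathcal{H}_0$ at $p \le 0.05$, so each pair $(c_L, c_i)$ and $(c_i, c_R)$ is flagged as a significantly correlated consecutive pair.

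The criterion then does the rest. The common-adjacency/common-preadjacency rule collects, for the left entity $c_L$, its post-adjacency set $\{y : (c_L,y)\ \text{significant}\}$, and for the right entity $c_R$, its pre-adjacency set $\{x : (x,c_R)\ \text{significant}\}$, and intersects them to obtain $E(v')$. By the previous step every $c_i$ lies in both sets, hence in the intersection, giving $\{c_1,\dots,c_m\} \subseteq E(v')$; the inclusion may be proper because other chunks can incidentally share the same surrounding context, which is exactly why the statement claims a subset rather than equality. Provided the acceptance gates $T_{\min} \le |E(v')| \le T_{\max}$ and $\sum_i M(c_i) \ge \mathit{freq}_T$ hold — guaranteed when $m$ and the frequency of $O$ fall in the admissible range — the variable $v'$ is proposed and accepted.

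The main obstacle I expect is the significance step: converting ``every true bigram appears'' into ``every true bigram clears the $p \le 0.05$ threshold'' requires a quantitative lower bound on the number of realized occurrences of $O$ and of each member $c_i$, together with a large-deviation estimate separating the observed transition frequency from the independence baseline. The faithful-parsing and already-learned hypotheses are precisely what must be spent to secure this bound, and one must also handle the case (allowed by the statement) where $c_L$ or $c_R$ is itself a variable, so that the relevant adjacency counts are the ones incremented on the immediate variable during parsing. The intersection and acceptance steps are then essentially bookkeeping.
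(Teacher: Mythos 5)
Your core argument --- that faithful parsing forces every $c_i$ into the post-adjacency set of $c_L$ and the pre-adjacency set of $c_R$, hence into their intersection and therefore into $E(v')$ --- is exactly the paper's proof, which consists of only that one set-membership step. The statistical-significance and acceptance-threshold issues you flag as the ``main obstacle'' are genuine features of the algorithm as described in the main text, but the paper's proof does not address them either, so your proposal takes the same route as the published argument and is, if anything, more candid about what that one-line argument leaves unresolved.
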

\begin{proof}
By definition, $\{c_1, c_2, ..., c_m\}$ is in the adjacency $Adj(c_L)$ entries of $c_L$ and the preadjacency entries of $c_R$, $Preadj(c_R)$. 
And therefore, $\{c_1, c_2, ..., c_m\} \in Adj(c_L) \cap Preadj(c_R)$ and hence will be included in the set of denoting entities for a new variable.
\end{proof}

\begin{figure}[ht!]
    \centering
    \includegraphics[height = 0.25\textheight]{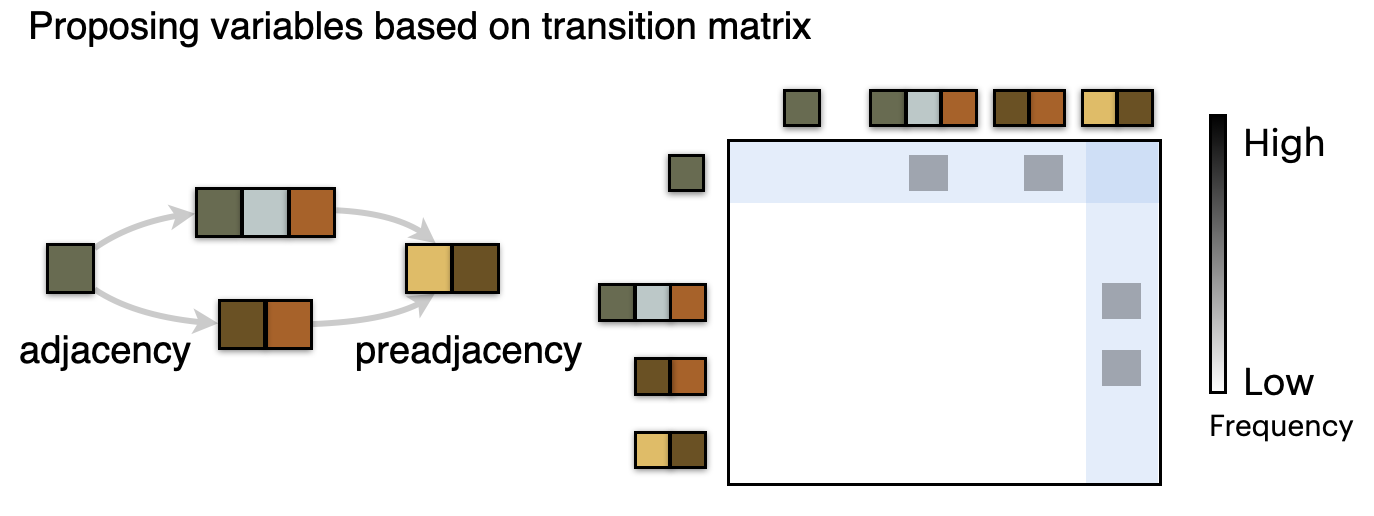}
    \caption{Common adjacency and preadjacency structure to identify a variable.}
    \label{fig:ccce}
\end{figure}

\begin{theorem}
In an infinitely long sequence, a chunk $c$ in the generative model is made up by concatinating the entities $\{c_1, c_2, ..., c_n\}$, and these entities $\{c_1, c_2, ..., c_n\}$ are parsed in an identical way as how the generative model samples $c$, then HVM will eventually learn to chunk $c'$ including all the entities in $\{c_1, c_2, ..., c_n\}$. 
\end{theorem}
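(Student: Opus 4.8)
The plan is to prove the statement by induction on the number of constituents that have been merged into a single contiguous chunk, where each merge step is justified by showing that the independence test underlying chunk proposal is \emph{consistent} in the limit of an infinitely long sequence. First I would spell out the content of the faithful-parsing hypothesis: whenever the generative model emits an occurrence of $c = c_1 \oplus c_2 \oplus \cdots \oplus c_n$, the recognition model segments that stretch exactly at the boundaries $c_1 \mid c_2 \mid \cdots \mid c_n$. Consequently, every emission of $c$ contributes a deterministic increment to each adjacent transition count $T_{c_i c_{i+1}}$, and since $c$ is assigned strictly positive occurrence probability by its flat Dirichlet draw, each such pair is observed with positive asymptotic frequency.

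The technical core is a consistency lemma for the proposal rule: if two units $a$ and $b$ satisfy $P(a,b) > P(a)\,P(b)$ as consecutive parses, then as the sequence length $N \to \infty$ the empirical transition and marginal frequencies converge to their true values (law of large numbers for the stationary sampling process), so the test statistic comparing $T_{ab}$ against the independence prediction diverges under the alternative and the null is rejected with probability tending to one. The model therefore adds $a \oplus b$ to $\mathbb{C}$. The delicate part is verifying strict positive dependence for the in-chunk pairs: I would argue that the guaranteed co-occurrence supplied by emissions of $c$ raises the conditional $P(c_{i+1}\mid c_i)$ above the unconditional $P(c_{i+1})$, and check that this surplus is not cancelled by occurrences of $c_i$ or $c_{i+1}$ outside $c$.

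Granting the lemma, the induction is immediate. In the base case the lemma merges some adjacent pair, say $c_1 \oplus c_2$, into a single dictionary entry. For the inductive step, suppose a chunk representing a contiguous block $c_i \oplus \cdots \oplus c_j$ has been learned; by the faithful-parsing hypothesis this block is parsed as one unit inside every emission of $c$, hence it is strictly positively dependent with its neighbour $c_{j+1}$ (or $c_{i-1}$), and the lemma produces a strictly longer contiguous block. Because $n$ is finite and each step strictly increases the size of the merged block, after finitely many proposals the model learns a chunk $c'$ whose constituents are exactly $\{c_1, \ldots, c_n\}$.

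The main obstacle I anticipate is the strict-dependence verification inside the lemma, complicated by the fact that the same units may also occur outside $c$: these external occurrences inflate the marginals and could in principle mask the in-chunk correlation, so the argument must show that the positive-probability contribution of $c$ survives this dilution. A secondary subtlety is that chunk proposals take place in an evolving dictionary and the order in which adjacent pairs are merged is not fixed; this is harmless because any merge of a genuinely dependent pair is monotone progress toward $c'$, and the finite constituent count forces termination at the full block.
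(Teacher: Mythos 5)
Your argument is essentially the paper's own proof, which is a single contradiction sentence: as long as some constituent of $c$ remains unmerged, a correlation persists in the transition entries between adjacent sub-parts and is resolved by the next chunk-proposal iteration, so finitely many iterations suffice. Your inductive framing, the law-of-large-numbers consistency lemma, and especially the dilution worry (external occurrences of $c_i$ inflating the marginals and potentially masking the in-chunk dependence) are refinements of a step the paper simply asserts without verification, so you have if anything been more careful than the source.
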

\begin{proof}
By contradiction. If any of the entities are not included in the learned chunks, a correlation will still exist in the transition entries between subparts that compose $c$, which will be resolved by the next chunk proposal iteration. 
\end{proof}

In practice, identifying the ground truth representation graph is much more complicated, as the representations learned by HVM can be mapped to a context-sensitive grammar \ref{SI:csg}, which is more complex and undecidable than context-free grammar. Learning the correct chunks that exactly match the generative model relates to the non-identifiability problem that multiple grammars can generate the same set of observed data \citep{Post1946, Greibach1968} - inferring the exact grammar from positive examples is undecidable. The sequence length comparable to the ground truth is possible to obtain. Instead of comparing the model with the specific chunks of the ground truth, we use a set of evaluation measures to evaluate the model's performance.

\begin{algorithm}
\SetAlgoLined
\KwIn{Learning sequences $seq$; representation graph $cg$; boolean flag for chunk learning $threshold\_chunk$; boolean flag for variable learning $abstraction$}
\KwOut{$cg$, $chunk\_record$}
$cg \leftarrow$ initialize representation graph\;
\tcp{Initialize chunk record}
$chunk\_record \leftarrow \{\}$\; 

$t \leftarrow 0$\;

\While{not $seq\_over$}{
    $current\_chunks, cg, seq, chunk\_record \leftarrow identify\_latest\_chunks(cg, seq)$\;
    
    $cg \leftarrow learning\_and\_update(current\_chunk, chunk\_record, cg, threshold\_chunk=True)$ \;
    
    \If{$abstraction$}{
    $cg \leftarrow abstraction\_update(current\_chunks, cg)$\;
    }
    
    $cg.forget()$\; 
    \tcp{multiply all frequency record by $\theta$}
}
\caption{HVM (online version, for learning sequences from human experiments)}
\end{algorithm}

\subsection{Parsing Search Steps} 
\label{SI:PSS}
HCM retrieves learned chunks to match them with the upcoming sequence, parsing the largest chunk that aligns with the sequence. However, as the dictionary size $|\mathbb{C}|$ grows, the number of search steps increases, leading to inefficiency. To address this, HVM organizes the chunks in $\mathbb{C}$ within a parsing graph, harnessing shared nodes across multiple chunks to speed up the chunk identification process.

\begin{definition}[Parsing Graph (PG)]
Parsing graph is a graph for chunk identification. The nodes inside such a graph are the chunks $\mathbb{C}$. Each parent node is the overlap of all of its children. 
\end{definition} 

To parse a chunk, the model traverses the memory tree structure to find the path up to the deepest node inside the tree consistent with the upcoming sequence. Pseudocode \ref{code:TreeTraversal} describes this process. 

\begin{algorithm}[H]
\SetAlgoLined
\KwIn{$rootNode$, the root node of the tree; $sequence$, the sequence to be matched}
\KwOut{$path$, the path to the leaf consistent with the sequence, if found}
$path \leftarrow$ an empty list\;
\If{not $rootNode$}{\Return $null$\;}
\Else{\Return \Call{TraverseTree}($rootNode$, $sequence$, $path$)\;}
\SetKwFunction{FTraverseTree}{TraverseTree}
\SetKwProg{Fn}{Function}{:}{}
\Fn{\FTraverseTree{$node$, $sequence$, $path$}}{
    $path$.append($node$)\;
    \If{\Call{IsConsistent}{$node$, $sequence$}}{
        \If{$node$.isLeaf()}{
            \Return $path$\;
        }
        \ForAll{$child$ in $node$.children}{
            $result \leftarrow$ \Call{TraverseTree}{$child$, $sequence$, $path$.copy()}\;
            \If{$result$}{
                \Return $result$\;
            }
        }
    }
    \Return $null$\;
}
\caption{Pseudocode for traversing a tree to find a path consistent with an upcoming sequence}
\end{algorithm}\label{code:TreeTraversal}

\begin{definition}[Parsing Search Steps]
Parsing search step refer to the number of search comparison made in the parsing graph to find the deepest chunk in the tree consistent with the upcoming sequence.
\end{definition}

\paragraph{Example}
Shown in Figure \ref{fig:parsingex}, when the sequence upcoming is 123, one starts from the ancestors of the graph and check whether 1 or 2 is at the beginning of the sequence, identifies 1, and proceeds into the subtree which stems from 1. The leaves are contains 12 and 13. Then HVM checks whether 12 is in the sequence or 13, thereby identifying 12. 

The mechanism to identify the biggest concrete chunk that can be used to parse the sequence involves the following steps:
\begin{itemize}
    \item Search from the top of the tree (nodes with no parents)
    \item Find the node in the next layer consistent with sequence
    \item Go to the children of the node until the node contains no more children (leaf node) or none of the node's children is consistent with the sequence
\end{itemize}

As illustrated in Figure \ref{fig:parsingex}, the parsing search steps of chunk 12 is: $PSS(12) = 2 + 2 = |A| + |R(A)|$
For chunk 221: $ PSS(221) = 2 + 2 + 3 $
For 21: $PSS(21) = 2 + 2$

\begin{figure}
    \centering
    \includegraphics[width = 0.7\textwidth]{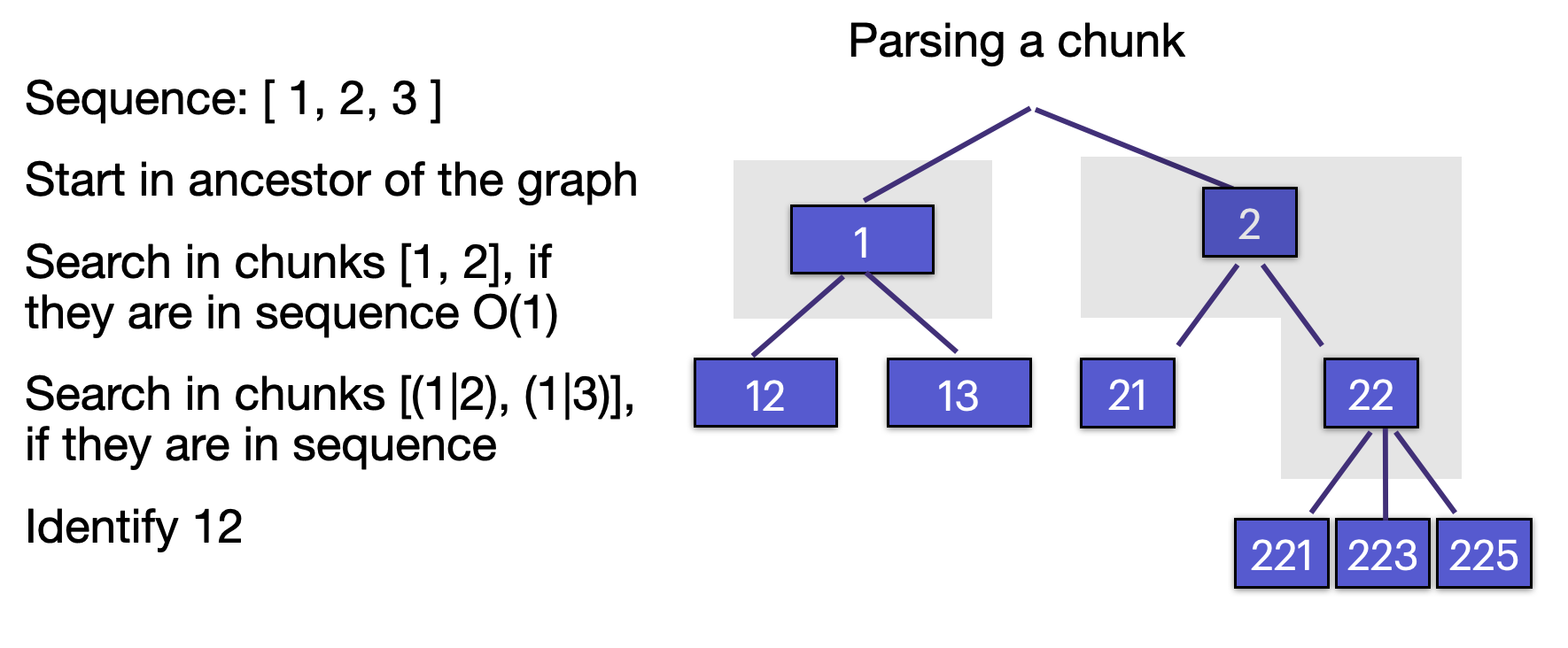}
    \caption{Example process when parsing a chunk using parsing graph. Gray region denotes chunk overlaps, which are the non-leaf nodes inside the parsing graph being the common prefix of their children. }
    \label{fig:parsingex}
\end{figure}
\paragraph{The advantage of memory abstraction}
\begin{theorem}[Guarantee on fewer search steps]
Identifying a chunk in a tree-structured memory takes fewer search steps than without.
\end{theorem}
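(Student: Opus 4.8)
The plan is to prove the statement by a charging argument at the level of individual comparisons: I will construct an injection from the comparisons made by the tree traversal of Algorithm~\ref{code:TreeTraversal} into the comparisons made by the flat, dictionary-wide scan that HCM performs, and then read off strictness from the leftover slack. Fix the upcoming sequence and let $c$, of length $k$, be the deepest consistent chunk that both schemes return; write the matched root-to-leaf path as $n_0 \to n_1 \to \cdots \to n_k = c$. Treat one symbol test as the elementary search step. In the flat scan every chunk $d \in \mathbb{C}$ is tested symbol by symbol until it either matches fully or fails, and these symbol tests are the flat comparisons. In the tree, at level $\ell$ the traversal compares the current symbol against every child of $n_\ell$, so the tree cost is $\sum_{\ell=0}^{k-1}\deg(n_\ell)$, where $\deg(n_\ell)$ counts the matched child $n_{\ell+1}$ together with its non-matched siblings.

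First I would define the charge. The $k$ matched-branch comparisons, one at each level $\ell = 0,\dots,k-1$ descending into $n_{\ell+1}$, I send to the $k$ matching comparisons that the flat scan performs on $c$ itself, matching level $\ell$ to position $\ell$. For a non-matched sibling $x$ of $n_{\ell+1}$ I pick any chunk $d_x$ in the subtree rooted at $x$; by the prefix property $d_x$ agrees with the sequence on positions $0,\dots,\ell-1$ and differs at position $\ell$, so the flat scan spends a genuine mismatch comparison on $d_x$ at position $\ell$, and I send the tree's comparison against $x$ to that mismatch.

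The key step is injectivity. Sibling subtrees are disjoint, and the subtrees hanging off distinct path nodes are disjoint as well, so the representatives $d_x$ chosen for distinct non-matched branches are distinct chunks, and each $d_x$ is charged only at the single level where it first deviates from the sequence. Because $c$ matches fully it is never used as a representative, so the matched-branch charges (all of which land on $c$) cannot collide with the non-matched charges. The injection then yields $\sum_{\ell=0}^{k-1}\deg(n_\ell) \le (\text{flat comparisons})$, so the tree never uses more steps than the flat scan. Strictness follows from what is left uncharged: every chunk $d \neq c$ sharing a nonempty prefix with the sequence forces the flat scan to pay matching comparisons along that shared prefix which are never charged, so the saving is strict exactly when some $d$ shares a prefix with $c$ --- equivalently, whenever the parsing graph has a branching internal node, which is precisely the regime in which prefix abstraction has taken place.

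I expect the main obstacle to be making the charge well defined and provably injective, rather than the inequality itself: one has to fix the representatives $d_x$ coherently and verify that the families of targets --- the full match on $c$, and the mismatches of the representatives at the various levels --- are pairwise disjoint subsets of the flat comparisons. A secondary subtlety is the compressed (radix) case, in which a single edge of the parsing graph spans several symbols; the same disjoint-subtree bookkeeping still applies, but each tree comparison must then be charged to the corresponding block of flat symbol tests, and one should confirm the per-symbol accounting still produces an injection.
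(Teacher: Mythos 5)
Your proof is correct, and it takes a genuinely finer-grained route than the paper. The paper's own proof is a single sentence of node counting: the tree traversal only inspects the children of the nodes on the root-to-$c$ path, and these form a subset of all nodes of the parsing graph (each of which is a chunk), so the number of candidates examined is at most $|\mathbb{C}|$, which is what the flat HCM scan pays. That argument charges one step per candidate chunk inspected; you instead charge individual symbol tests and build an explicit injection into the flat scan's comparisons. Your accounting buys two things the paper does not have: it remains valid under the more realistic cost model in which testing a candidate chunk against the sequence is not unit cost, and it isolates exactly when the inequality is strict --- namely the uncharged matching comparisons on chunks sharing a nonempty prefix with the sequence. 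The latter matters because the theorem claims \emph{fewer} steps, yet in a degenerate parsing graph with no shared prefixes the two costs coincide; both the paper's subset argument and your injection only give $\leq$ in general, but yours makes the equality case explicit. The radix-edge subtlety you flag is real for HVM's parsing graph (the paper's own example $PSS(221) = 2+2+3$ has edges spanning several symbols), though the disjoint-subtree bookkeeping does carry over block by block as you describe. The one assumption worth stating explicitly is that both schemes return the same chunk $c$, i.e., that the deepest consistent node of the prefix trie is the longest consistent chunk; this holds because every prefix of a sequence-consistent string is itself consistent, so the path to the longest consistent chunk is traversable.
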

\begin{proof} The hierarchically structured memory system reduces the searching step to the number of branches on the path that leads to the identified chunk, which is a subset of all nodes in the tree. 
\end{proof}

\paragraph{Expected parsing search steps}
For a chunk $c$ that occur with probability $P(c)$, the parsing search steps is denoted as $PSS(c)$. 

As an example, three chunks $c_1$, $c_2$, and $c_3$ composes a sequence and they are parsed with probability $P(c_1)$, $P(c_2)$, and $P(c_3)$ in a sequence. 

Without the hierarchical memory structure, the average parsing search steps to identify chunks in the sequence is:
\[PSS(c_1)P(c_1) + PSS(c_2)P(c_2) + PSS(c_3)P(c_3)\]

With the prefix Trie, the number of times needed to check whether a chunk matches a certain abstraction branch requires a number of parsing steps:
\begin{equation}
    PSS(c) = N_{children}(Pa(c))
\end{equation}
$Pa(c)$ denotes the parent of chunk $c$ in $PG$, $N_{children}(Pa(c))$ is the number of children that chunk $c$'s parent contain.  
An abstract chunk $a$ in such a parsing graph can be the prefix intersection of three chunks $c_1$, $c_2$, and $c_3$, $a = c_1 \cap c_2 \cap c_3$.
Parsing abstraction summarizes all the subordinate chunks that underlies such abstraction. Therefore the abstraction node is more likely to be parsed compared to individual chunks. 
\[P(a) = (P(c_1) + P(c_2) + P(c_3))\]
Additionally, abstraction can be used as an anchor to find the subsequent denoting chunks. $P(c_1|a)$, $P(c_2|a)$ and $P(c_3|a)$. The expected parsing search steps to identify chunks for such sequence $S$ becomes:
\begin{align*}
\mathbb{E}_{S}(PSS) = &\ PL(c_1 \cap c_2 \cap c_3)(P(c_1) + P(c_2) + P(c_3)) \\
& + PL(c_1 - a)P(c_1|a)  + PL(c_2 - a)P(c_2|a) + PL(c_3 - a)P(c_3|a)
\end{align*}


Generalizing this formula to any parsing graph, the expected parsing search steps given the parsing graph's internal abstraction and concrete chunks would be
\begin{equation}
    EPSS(PG) = \sum_{c \in \mathbb{C}}P(c) \sum_{path(c)} len(N_{children}(PA(c)))
\end{equation}
$path(c)$ is the path starting with the root node of the graph that leads to the identification of the chunk node c. Since $P(c)_{c_i \in path(c)} = P(a)\prod_{c_i \in path(c)} P(c_i|pa(c_i))$, one can also give a lower and upper bound on the number of parsing search steps needed to arrive at a random chunk in the parsing graph:
\begin{equation}
    EPSS(PG) = \sum_{v\in \mathbb{C}} P(v|PA(v)) PSS(v - PA(v)) 
\end{equation}
$PSS(a - b)$ denotes the subgraph that starts from $b$ which reaches $a$. 
If $v$ is inside the ancestor nodes, then $ancestor(v) = \varnothing$, and therefore $P(v|ancestor(v)) = P(v)$. 

The expected parsing search steps for a representation graph can be calculated by averaging out the expected parsing search steps for each sub graph inside the parsing graph. The lower bound is the number of steps that leads to the shallowest node, and the upper bound is the number of steps that leads to the deepest leaf node. 
\begin{equation}
\arg\min_{c \in \mathbb{C}} \sum_{u \in path(c)} len(n_{children}(PA(c))) \leq \mathbb{E}_{c\in \mathbb{C}}(c) \leq \arg\max_{c \in \mathbb{C}} \sum_{u \in path(c)} len(n_{children}(PA(c)))    
\end{equation}

\subsection{Encoding Cost}
\label{sup:encoding_cost}


A representation graph $RG$ specifies a distribution of observation instances encoded in a compositional manner. One can the minimal encoding cost to distinguish all the variables.

\begin{equation}\label{eq:complexity}
    RC(G) = \sum_{v \in G} \sum_{ u\in E(v)} -\log P(u|v) + \sum_{v \in G} -\log P(v) + \sum_{c \in G}    -\log P(c)   
\end{equation}

The more ambiguous a variable is, the less resources needed to encode the variable; the smaller the variable graph is, the less edges it has, the bigger the probabilities of parsing and variable identification, the smaller the encoding cost. Every time when a new edge points from a pre-existing variable to a new variable, the encoding resource expands by an amount determined by the conditional probability. 


\paragraph{Example}
Representation graphs with different nested structure translates to different encoding cost. In Figure \ref{fig:variabledemo}, three graphs contains an increasing abstraction specificity. In graph 1, the variable 'world' is split into two variables: animals and plants. Each of these variables do not denote a specific observation, but serves as a placeholder for any instances of specific observation that fits into that particular category. 
\begin{figure}
    \centering
    \includegraphics[width = \linewidth]{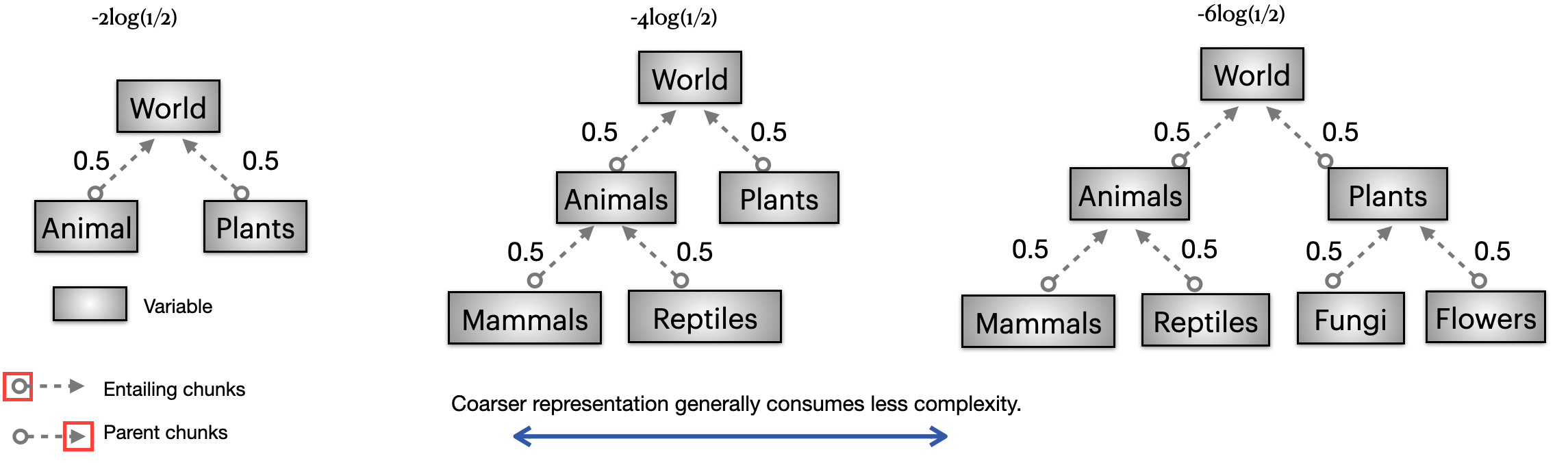}
    \caption{An increasing level of abstractions}
    \label{fig:variabledemo}
\end{figure}
The encoding cost for such a graph, given the specified observational probability on the edges would be:
\begin{equation}
    EC(g1) = -\log(P(World)) -\log(P(Plants|World))  -\log(P(Animal|World)) = -2\log(0.5)
\end{equation}

Since once the variable $World$ has been identified, one only need to distinguish the subcategories inside the variable $World$, and the minimal code length to distinguish one subcategory (animals) from another (plants) would be the conditional probability: $P(Plants|World)$.

Whereas if $g1$ does not contain a nested structure, one would need to encode the variable world separately from the variable animal and variable plants, in this way, the alternative encoding length without an edge connecting the sub categories with the main category would be:
\begin{equation}
    EC(\hat{g1}) = -\log(P(World)) -\log(P(Plants))  -\log(P(Animals)) = -2\log(0.5)
\end{equation}

For $g2$, the case is slightly different: the variable category that denotes animals is further split into the category of mammals and reptiles. Thereby, extra encoding costs are needed. 

\begin{equation}
\begin{split}
EC(g2) & = -\log(P(World)) -\log(P(Plants|World))  -\log(P(Animals|World))  \\ 
&  - \log(P(Mammals|Animals)) - \log(P(Reptiles|Animals))  \\
&= -4\log(0.5)    
\end{split}
\end{equation}

If each of these categories are encoded separately and each variable is not pointed to other variables, the information content to encode a variable graph without edges in the case of $\hat{g2}$ would be:

\begin{equation}
\begin{split}
EC(\hat{g2}) & = -\log(P(World)) -\log(P(Plants)) \\
            & = -\log(P(Animals)) - \log(P(Mammals)) - \log(P(Reptiles)) \\
       & = -6\log(0.5)
\end{split}
\end{equation}
Since $P(Mammals) = P(mammals|animals)P(animals)$. 

This difference is more pronounced going from $g2$ to $g3$:
\begin{equation}
\begin{split}
EC(g3) & = -\log(P(World)) -\log(P(Plants|World))  -\log(P(Animals|World))  \\ 
&  - \log(P(Mammals|Animals)) - \log(P(Reptiles|Animals))  \\
&  - \log(P(Fungi|Plants)) - \log(P(Flowers|Plants)) \\
&  = -6\log(0.5)    
\end{split}
\end{equation}
Whereas for $\hat{g3}$, not encoding variables in a nested structure would result in an encoding cost of 
\begin{equation}
\begin{split}
EC(\hat{g3}) & = EC(\hat{g2}) +  - \log(P(Fungi)) - \log(P(Flowers))\\
       & = -10\log(0.5)
\end{split}
\end{equation}
These examples illustrate organizing variables into a nested recursive structure saves encoding cost.

\subsection{Alternative formation as learning a context sensitive grammar}
\label{SI:csg}
HVM learns a 5-tuple from the sequences $\mathit{G} = \{\mathbb{A}, \mathbb{C}, \mathbb{V}, \mathbb{R}, \mathbb{P}\}$. 
A set of atomic units $\mathbb{A} = \{a_l\}$\\
A set of chunks $\mathbb{C} = \{c_k\}$, $k = 1, 2, ..., $, each chunk $c_k$ is a sequence of atomic units and variables.  The model uses chunks from $\mathbb{C}$ to parse the observation sequence. Assume a random variable $C$ as the chunk being parsed, taking a value $c$ from $\mathbb{C}$, $c \in \mathbb{C}$, $c \sim P$. $P$ is the parsing probability.  $\sum_{c\in \mathbb{C}} P(C = c) = 1$\\
A set of variables  $\mathbb{V} = \{v_i\}$, $i = 1, 2, ..., $\\
A set of rules $\mathbb{R} = \{E(v_1), E(v_2), ..., E(v_n)\}$ that each specifies the set of chunks $E(v) = \{c_j\}$ denoting each variable $v \in \mathbb{V}$. Probabilities on variable denoting $\forall i$, $\sum_j P(v_i \rightarrow c_j) = 1$\\

\subsection{Rate-Distortion Theory}
Rate-distortion theory specifies that the best possible compression of a signal $X$ contains a lower bound on quality loss specified by the Rate-Distortion Function (R(D)): $R(D) = \inf_Q R_Q \text{s.t.} \mathbb{E}[d(X, \hat{X})]\leq D$ \citep{Shannon1959, Cover2012}. According to the $RD$ theory, the minimum $R$ (the amount of compression) at which information can be transmitted over a communication channel for a given level of information loss D is specified by the \textbf{Rate-Distortion Function $(R(D))$}: $R(D) = \min_{p(\hat{x}|x): E[d(X, \hat{X})] \leq D} I(X; \hat{X})$. The mutual information quantifies how much information needs to be preserved during compression to achieve this fidelity, and the function minimizes this quantity while satisfying the distortion constraint. Written in another way,$R(D) = \inf_Q R_Q \text{ s.t.} E[d(X, \hat{X})]\leq D$. $Q$ represents the encoding function that maps from $x$ to $\hat{x}$. RD is agnostic to the choice of the distortion function, and in this case we use the representation complexity, which measures the nestedness of the variables learned by HVM. 

\subsection{Parsing Probability}
A sequence $S$ is parsed by the chunks in $\mathbb{C}$ to obtain a distribution of chunk parsing probability $P_{\mathbb{C}}$. This distribution on the support set of chunks has shapes dependent on the parsing mechanism. HVM employs a greedy strategy and chooses the deepest consistent chunk in the memory tree to parse a segment of the upcoming sequence. Other parsing strategies will result in alternative distributions for $P_{\mathbb{C}}$, in addition to the transition probability $P_{\mathbb{C}}(c_i|c_j)$. 

Define the set $\mathit{H}_{\mathbb{C}}(S)$ as the set of all distributions results in any parsing method using the chunks in $\mathbb{C}$ to parse a sequence $S$, and any parsing probability $P \in \mathit{H}_{\mathbb{C}}(S)$. Hence each evaluation measure on the representation is bounded by the optimal and the most unfortunate parsing occasions. 
Taking the predictive power measure as an example: 
\begin{equation}
    \arg\min_{P \in \mathit{H}_{\mathbb{C}}(S)}\mathbb{E}_{P}|c|\leq \mathbb{E}_{P_{HVM}}|c| \leq  \arg\max_{P \in \mathit{H}_{\mathbb{C}}(S)}\mathbb{E}_{P}|c|
\end{equation}
The measured value is bounded by the expected value from the best parsing distribution and the worst parsing distribution.

\subsection{Uncertainty}

Variables introduce uncertainty. 
Without variables, the chunks inside $\mathbb{E}_{v}$ needs to be encoded as distinct chunks, consuming an encoding cost of $\sum_{c \in \mathbb{E}_v} -\log P(c)$. 

We have the ground truth set of sequential observational units $gt$. Let's say in a recall task, the agent learns a variable chunk to describe the ground truth, the variable denotes to three concrete chunks $c_1$, $c_2$ and $c_3$. The accuracy when $c_1$ is being sampled will be $D(gt, c_1)$, when $D$ is a distance function evaluating the agreement between the ground truth chunk $gt$ and $c_1$. Then one can evaluate the expected accuracy within a variable:
\begin{equation}
    \mathbb{E}(gt, v) = \sum_{c\in \mathbb{E}_v}{P(c|v)D(gt,c)}
\end{equation}
$P(c|v)$ is the probability of sampling chunk $c$ given that the variable $v$ is identified. 

Compared to chunks without variables, recalling chunks with embedded variables introduces variability, and brings down recall accuracy.
But this strategy is especially suited when encoding resources is limited. Then, it is better when the resources are assigned to the more probable chunks that are predictive of a bigger sequence, while the varying entities that occur with lower probability can be denoted as a variable that serves as a placeholder. 

\begin{equation}
    Uncertainty(G) = \sum_{c \in \mathbb{C}} P(c) \sum_{ v\in C} H(v) 
\end{equation}
And $H(v) = -\sum_{c \in E(v)}P(c)\log{P(c)}$. 



\subsection{Other evaluation measures}
\textbf{Explanatory Volume:} $\frac{|S|}{|W|}$ The length of the original sequence (in atomic units) divided by the length of the parsed sequence (in units of chunks), i.e., The average size of the sequence that the current representation graph can explain at each parsing step. \\ 
\textbf{Sequence Negative Log Likelihood:} The lower bound of information content needed to encode the sequence $S$, as $S$ is parsed into chunks $(c_1, c_2, ..., c_n)$, $-\log P(S) = -\log (\prod_{c_i \in \mathbb{C}} P(c_i))$.  \\
\textbf{Representation Entropy:} The uncertainty contained within each chunk parse $\sum_{c \in \mathbb{C}} P(c) \sum_{v \in V(c)} \sum_{u\in E(v)}-P(u|v)\log P(u|v)$\\
\subsection{Regressing LLM on human data}
\label{sup:llm}
We also regressed the negative log likelihood (NLL) of all LLMs against human sequence recall time and presented the resulting R-squared values in Figure \ref{fig:llmhuman}. During the training block, the NLL of LLMs shows a stronger correlation with human recall times compared to alternative models. However, when it comes to human transfer, the cognitive models demonstrate a much stronger correlation than the LLMs, with HVM aligning most closely to human transfer performance.

\begin{figure}[ht!]
    \centering
    \includegraphics[width = \textwidth]{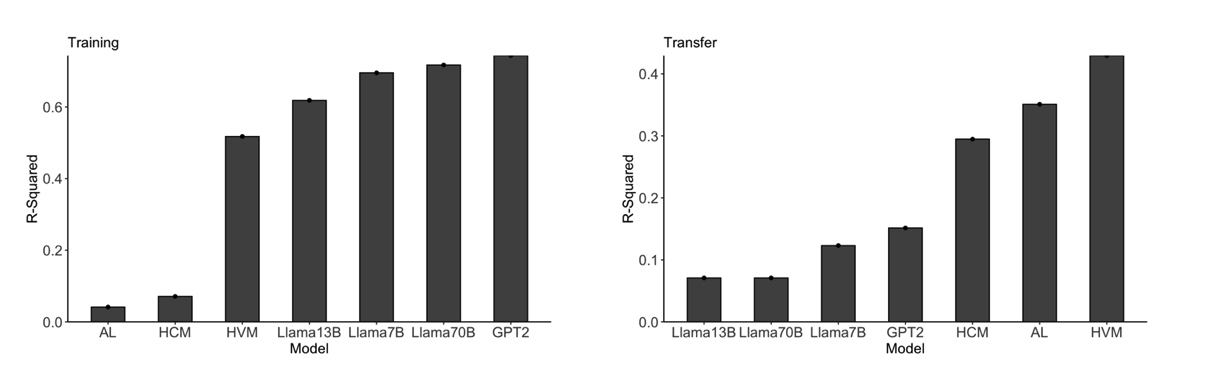}
    \caption{Regressing all models' sequence likelihood on human sequence recall time.}
    \label{fig:llmhuman}
\end{figure}

\newpage

\end{document}